\newtheorem{theorem}{Theorem}
\newtheorem{lemma}{Lemma}
\DeclareMathOperator{\sgn}{sgn}
\DeclareMathOperator{\argmin}{argmin}
\title{Efficient Elastic Net Regularization \\ for Sparse Linear Models}
\author{Zachary C. Lipton \qquad \qquad Charles Elkan \\
University of California, San Diego\\
\texttt{\{zlipton, elkan\}@cs.ucsd.edu}
}
\date{May 27th, 2015}
\begin{document}

\maketitle
\begin{abstract}
This paper presents an algorithm for efficient training of sparse linear models with elastic net regularization. 
Extending previous work on delayed updates,
the new algorithm applies stochastic gradient updates to non-zero features only,
bringing weights current as needed with closed-form updates.
Closed-form delayed updates for the $\ell_1$, $\ell_{\infty}$,
and rarely used $\ell_2$ regularizers have been described previously.
This paper provides closed-form updates for the popular squared norm $\ell^2_2$ and elastic net regularizers.
We provide dynamic programming algorithms that  perform each delayed update in constant time.
The new $\ell^2_2$ and elastic net methods handle both fixed and varying learning rates,
and both standard {stochastic gradient descent} (SGD) 
and {forward backward splitting (FoBoS)}.
Experimental results show that on a bag-of-words dataset with $260,941$ features,
but only $88$ nonzero features on average per training example,
the dynamic programming method trains a logistic regression classifier with elastic net regularization
over $2000$ times faster than otherwise.
\end{abstract}

\section{Introduction}
\label{section:introduction}
For many applications of linear classification or linear regression,
training and test examples are sparse,
and with appropriate regularization, 
a final trained model can be sparse and still achieve high accuracy.
It is therefore desirable to be able to train linear models
using algorithms that require time that scales only with the number of non-zero feature values.

Incremental training algorithms such as stochastic gradient descent (SGD)
are widely used to learn high-dimensional models from large-scale data.
These methods process each example one at a time or in small batches,
updating the model on the fly.
When a dataset is sparse and the loss function is not regularized,
this sparsity can be exploited
by updating only the weights corresponding to non-zero feature values for each example.
However, to prevent overfitting to high-dimensional data,
it is often useful to apply a regularization penalty,
and specifically to impose a prior belief that the true model parameters
are sparse and small in magnitude.
Unfortunately, widely used regularizers such as $\ell_1$ (lasso), $\ell_2^2$ (ridge),
and elastic net ($\ell_1 + \ell_2^2)$
destroy the sparsity of the stochastic gradient for each example,
so they seemingly require most weights to be updated for every example.

This paper builds upon methods for delayed updating of weights
first described by \cite{carpenter2008lazy}, \cite{singer2009efficient}, and \cite{langford2009sparse}.
As each training example is processed,
the algorithm updates only those weights corresponding to non-zero feature values in the example.
The model is brought current as needed 
by first applying closed-form constant-time delayed updates for each of these weights.
For sparse data sets, the algorithm runs in time independent of the nominal dimensionality, 
scaling linearly with the number of non-zero feature values per example.
 
To date, constant-time delayed update formulas have been derived
only for the $\ell_1$ and $\ell_{\infty}$ regularizers, and for the rarely used $\ell_2$ regularizer.
We extend previous work by showing the proper closed form updates
for the popular $\ell^2_2$ squared norm regularizer and for elastic net regularization.
When the learning rate varies (typically decreasing as a function of time),
we show that the elastic net update can be computed
with a dynamic programming algorithm that requires only constant-time computation per update.%
\footnote{The dynamic programming algorithms below
for delayed updates with varying learning rates use time O(1) per update,
but have space complexity $O(T)$
where $T$ is the total number of stochastic gradient updates.
If this space complexity is too great, that problem can be solved by allotting a fixed space budget 
and bringing all weights current whenever the budget is exhausted. 
As the cost of bringing weights current is amortized across many updates, 
it adds negligibly to the total running time.
}

A straightforward experimental implementation of the proposed methods
shows that on a representative dataset containing
the abstracts of a million articles from biomedical literature,
we can train a logistic regression classifier with elastic net regularization
over $2000$ times faster than using an otherwise identical implementation
that does not take advantage of sparsity.
Even if the standard implementation exploits sparsity when making predictions during training,
additionally exploiting sparsity when doing updates, via dynamic programming,
still makes learning $1400$ times faster.
%
%

\section{Background and Definitions}
\label{section:background}

We consider a data matrix $X \in \mathbbm{R^{n \times d}}$ 
where each row $\boldsymbol{x_i}$ is one of $n$ examples
and each column, indexed by $j$, corresponds to one of $d$ features.
We desire a linear model parametrized by a weight vector $\boldsymbol{w} \in \mathbbm{R^d}$
that minimizes a convex objective function $F(\boldsymbol{w})$
expressible as $\sum_{i=1}^{n} F_i (\boldsymbol{w})$,
where $F$ is the loss with respect to the entire dataset $X$
and $F_i$ is the loss due to example $\boldsymbol{x}_i$.

In many datasets, the vast majority of feature values $x_{ij}$ are zero.
The-bag-of-words representation of text is one such case.
We say that such datasets are {sparse}.
When features correspond to counts or to binary values, as in bag-of-words,
we sometimes say say that a zero-valued entry $x_{ij}$ is {absent}.
We use $p$ to refer to the average number of nonzero features per example.
Naturally, when a dataset is sparse,
we prefer algorithms that take time $O(p)$ per example to those that require time $O(d)$.

\subsection{Regularization}
\label{section:background-regularization}

To prevent overfitting, regularization restricts the freedom of a model's parameters,
penalizing their distance from some prior belief.
Widely used regularizers penalize large weights with an objective function of the form
\begin{equation}
\label{eqn:objective-function}
F(\boldsymbol{w}) = {L}(\boldsymbol{w}) + R(\boldsymbol{w}).
\end{equation}
Many commonly used regularizers $R(\boldsymbol{w})$
are of the form $\lambda||\boldsymbol{w}||$
where $\lambda$ determines the strength of regularization and
the  $\ell_0$, $\ell_1$, $\ell^2_2$, or $\ell_{\infty}$ norms
are common choices for the penalty function.
The $\ell_1$ regularizer is popular
owing to its tendency to produce sparse models.
In this paper, we focus on elastic net, 
a linear combination of $\ell_1$ and $\ell^2_2$ regularization
that has been shown to produce comparably sparse models
to $\ell_1$ while often achieving superior accuracy \cite{zou2005regularization}.

\subsection{Stochastic Gradient Descent}
\label{section:background-optimization}

Gradient descent is a common strategy to learn optimal parameter values $\boldsymbol{w}$.
To minimize $F(\boldsymbol{w})$, a number of steps $T$, indexed by $t$, are taken
in the direction of the negative gradient:
$$
\boldsymbol{w}^{(t+1)} := \boldsymbol{w}^{(t)} - \eta \sum_{i=1}^{n} \nabla F_i(\boldsymbol{w})
$$
where the learning rate $\eta$ may be a function of time $t$.
An appropriately decreasing $\eta$ ensures 
that the algorithm will converge to a vector $\boldsymbol{w}$ within distance $\epsilon$
of the optimal vector for any small value~$\epsilon$~\cite{carpenter2008lazy}.

Traditional (or ``batch") gradient descent requires a pass through the entire dataset for each update.
{Stochastic gradient descent} (SGD) circumvents this problem
by updating the model once after visiting each example.
With SGD, 
examples are randomly selected one at a time or in small so-called mini-batches.
For simplicity of notation, without loss of generality we will assume that examples are selected one at a time.
At time $t+1$ the gradient $\nabla F_i(\boldsymbol{w}^{(t)})$ is calculated 
with respect to the selected example $\boldsymbol{x_i}$,
and then the model is updated according to the rule 
$$
\boldsymbol{w}^{(t+1)} := \boldsymbol{w}^{(t)} - \eta \nabla F_i(\boldsymbol{w}^{(t)}).
$$
Because the examples are chosen randomly,
the expected value of this noisy gradient
is identical to the true value of the gradient
taken with respect to the entire corpus.

Given a continuously differentiable convex objective function $F(\boldsymbol{w})$, 
stochastic gradient descent is known to converge for learning rates $\eta$
that satisfy $\sum_t \eta_t = \infty$ and $\sum_t \eta_t^2 < \infty$ \cite{bottou2012stochastic}.
Learning rates $\eta_i \propto {1}/{t}$ and $\eta_i \propto {1}/{\sqrt {t}}$ both satisfy these properties.%
\footnote{
Some common objective functions, such as those involving $\ell_1$ regularization,
are not differentiable when weights are equal to zero.
However, forward backward splitting (FoBoS)
offers a principled approach to this problem \cite{singer2009efficient}.
}
For many objective functions, 
such as those of linear or logistic regression without regularization, 
the noisy gradient $\nabla F_i(\boldsymbol{w})$ is sparse when the input is sparse. 
In these cases, one needs only to update the weights corresponding to non-zero features 
in the current example $\boldsymbol{x}_i$. 
These updates require time $O(p)$, 
where $p \ll d$ is the average number of nonzero features in an example. 

Regularization, however, can ruin the sparsity of the gradient.
Consider an objective function as in Equation~(\ref{eqn:objective-function}),
where $R(w) = ||\boldsymbol{w}||$ for some norm $|| \cdot ||$.
In these cases, even when the feature value $x_{ij} = 0$,
the partial derivative $(\partial / \partial w_j) F_i$ is nonzero owing to the regularization penalty if $w_j$ is nonzero.
A simple optimization is to update a weight only when either the weight or feature is nonzero.
Given feature sparsity and persistent model sparsity throughout training,
not updating $w_j$ when $w_j = 0$ and $x_{ij} = 0$ provides a substantial benefit.
But such an approach still scales with the size of the model, which may be far larger than $p$.
In contrast, the algorithms below scale in time complexity $O(p)$.

%
%
\subsection{Forward Backward Splitting}
\label{section:background-fobos}

Proximal algorithms are an approach to optimization in which each update consists 
of solving a convex optimization problem \cite{parikh2013proximal}.
Forward Backward Splitting (FoBoS) \cite{singer2009efficient}
is a proximal algorithm that provides a principled approach
to online optimization with non-smooth regularizers.
We first step in the direction of the negative gradient of the differentiable unregularized loss function.
We then update the weights by solving a convex optimization problem
that simultaneously penalizes distance from the new parameters
and minimizes the regularization term.

In FoBoS, first a standard unregularized stochastic gradient step is applied:
\begin{equation*}
\boldsymbol{w}^{(t+\frac{1}{2})} = \boldsymbol{w}^{(t)} - \eta^{(t)} \nabla {L}_i(\boldsymbol{w}^{(t)}).
\end{equation*}
Note that if 
$(\partial / \partial w_j) L_i = 0$
then $w_j^{(t+\frac{1}{2})} = w_j^{(t)}$. 
Then a convex optimization is solved, applying the regularization penalty. 
For elastic net the problem to be solved is
\begin{equation}
\boldsymbol{w}^{(t+1)} = \argmin_w \left( \frac{1}{2} || \boldsymbol{w}-\boldsymbol{w}^{(t+\frac{1}{2})} ||_2^2 
	+ \eta^{t}\lambda_1 ||\boldsymbol{w}||_1 + \frac{1}{2} \eta^{t} \lambda_2 ||\boldsymbol{w}||_2^2 \right).
\label{eqn:fobos-update}
\end{equation}
The problems corresponding to $\ell_1$ or $\ell_2^2$ separately can be derived by setting the corresponding $\lambda$ to~$0$.

%
%
\section{Lazy Updates}
\label{section:lazy-updates}

The idea of lazy updating was introduced in \cite{carpenter2008lazy}, \cite{langford2009sparse}, and \cite{singer2009efficient}.
This paper extends the idea for the cases of $\ell_2^2$ and elastic net regularization.
The essence of the approach is given in Algorithm~(\ref{alg:lazy}). 
We maintain an array $\boldsymbol{\psi} \in \mathbbm{R}^d$ 
in which each $\psi_j$ stores the index of the last iteration at which the value of weight $j$ was current.
When processing example $\boldsymbol{x}_i$ at time $k$,
we iterate through its nonzero features $x_{ij}$.
For each such nonzero feature, 
we lazily apply the $k-\psi_j$ delayed updates collectively in constant time,
bringing its weight $w_j$ current.
Using the updated weights, we compute the prediction $\hat{y}^{(k)}$
with the fully updated relevant parameters from $\boldsymbol{w}^{(k)}$.
We then compute the gradient and update these parameters.

When training is complete,
we pass once over all nonzero weights
to apply the delayed updates to bring the model current.
Provided that we can apply any number $k-\psi_j$ of delayed updates in $O(1)$ time,
the algorithm processes each example
in $O(p)$ time regardless of the dimension $d$.

\begin{algorithm}[t]
\caption{Lazy Updates}\label{}
\begin{algorithmic}
\Require{$\boldsymbol{\psi} \in \mathbbm{R}^d$}
\For{$t \in 1,...,T$}
	\State{Sample $x_i$ randomly from the training set}
	\For{$j$  s.t. $x_{ij} \neq 0$}
		\State{$w_j \gets Lazy(w_j, t, \psi_j)$}
		\State{$\psi_{j} \gets t$}
	\EndFor
	\State{$\boldsymbol{w} \gets \boldsymbol{w} - \nabla F_i(\boldsymbol{w})$}
\EndFor
\end{algorithmic}
\label{alg:lazy}
\end{algorithm}

To use the approach with a chosen regularizer,
it remains only to demonstrate the existence of constant time updates.
In the following subsections,
we derive constant-time updates for $\ell_1$, $\ell_2^2$ and elastic net regularization,
starting with the simple case where the learning rate $\eta$ is fixed during each epoch,
and extending to the more complicated case
when the learning rate is decreased every iteration as a function of time.%
\footnote{
The results hold for schedules of weight decrease that depend on time, 
but cannot be directly applied to AdaGrad  \cite{duchi2011adaptive} or RMSprop,
methods where each weight has its own learning rate 
which is decreased with the inverse of the accumulated sum (or moving average)
of squared gradients with respect to that weight.
}

%
%
\section{Prior Work}
\label{section:prior-work}
Over the last several years,
a large body of work has advanced the field of online learning.
Notable contributions include ways of adaptively decreasing the learning rate
separately for each parameter such as AdaGrad \cite{duchi2011adaptive} and AdaDelta \cite{zeiler2012adadelta},
using small batches to reduce the variance of the noisy gradient \cite{li2014efficient},
and other variance reduction methods such as Stochastic Average Gradient (SAG) \cite{schmidt2013minimizing}
and Stochastic Variance Reduced Gradient (SVRG) \cite{johnson2013accelerating}.

In 2008, Carpenter described an idea
for performing lazy updates for stochastic gradient descent  \cite{carpenter2008lazy}.
With that method, we maintain a vector $\boldsymbol{\psi} \in \mathbbm{N}^d$,
where each $\psi_i$ stores the index of the last epoch
in which each weight was last regularized.
We then perform periodic batch updates.
However, as the paper acknowledges,
the approach described results in updates
that do not produce the same result
as applying an update after each time step.

Langford \emph{et al.} concurrently developed an approach for
lazily updating $\ell_1$ regularized linear models \cite{langford2009sparse}.
They restrict attention to $\ell_1$ models.
Additionally, they describe the closed form update only
when the learning rate $\eta$ is constant,
although they suggest that an update can be derived when $\eta_t$ decays as $t$ grows large.
We derive constant-time updates
for $\ell_2^2$ and elastic net regularization.
Our algorithms are applicable with both fixed and varying learning rates.

In 2008 also, as mentioned above, Duchi and Singer described the FoBoS method \cite{duchi2008efficient}.
They share the insight of applying updates lazily
when training on sparse high-dimensional data.
Their lazy updates hold for norms $\ell_q$ for $q \in \{1,2,\infty \}$,
However they do not hold for the commonly used $\ell^2_2$ squared norm.
Consequently they also do not hold for mixed regularizers
involving $\ell^2_2$ such as the widely used elastic net $(\ell_1 + \ell^2_2)$.

%
%
\section{Constant-Time Lazy Updating for SGD}
\label{section:lazy-attenuated}

In this section, we derive constant-time stochastic gradient updates
for use when processing examples from sparse datasets.
Using these, the lazy update algorithm can train linear models
with time complexity $O(p)$ per example.
For brevity, we describe the more general case where the learning rate is varied.
When the learning rate is constant the algorithm can be easily modified to have $O(1)$ space complexity.

%
%
\subsection{Lazy $\ell_1$ Regularization with Decreasing Learning Rate}
\label{section:lazy-lasso-attenuated}

The closed-form update for $\ell_1$ regularized models is \cite{singer2009efficient}
\begin{equation*}
w_j^{(k)} = \sgn(w_j^{(\psi_j)}) \left[ |w_j^{(\psi_j)}| -   \lambda_1  \left( S(k-1) - S(\psi_j -1) \right) \right]_+
\label{eq:l1_lazy_update}
\end{equation*}
where $S(t)$ is a function that returns
the partial sum $ \sum_{\tau=0}^{t} \eta^{(\tau)}$.
The sum $\sum_{\tau=t}^{t+n-1} \eta^{(\tau)}$
can be computed in constant time using a caching approach.
On each iteration $t$, we compute $S(t)$ in constant time
given its predecessor as $S(t) = \eta^{(t)} + S(t-1)$.
The base case for this recursion is $S(0) = \eta^{(0)}$.
We then cache this value in an array for subsequent constant time lookup.

When the learning rate decays with  $1/t$,
the terms $\eta ^ {(\tau)}$ follow the harmonic series,.
Each partial sum of the harmonic series
is a harmonic number $H(t) = \sum_{i=1}^{t} {1}/{t}$.
Clearly
$$
\sum_{\tau =t}^{t+n-1} \eta^{(\tau)} = \eta^{(0)} \left( H(t+n) - H(t) \right)
$$
where $H_\tau$ is the $\tau_{th}$ harmonic number.
While there is no closed-form expression
to calculate the $\tau_{th}$ harmonic number,
there exist good approximations.

The $O(T)$ space complexity of this algorithm may seem problematic.
However, this problem is easily dealt with by bringing all weights current after each epoch.
The cost to do so is amortized across all iterations and is thus negligible.

%
%
\subsection{Lazy $\ell_2^2$ Regularization with Decreasing Learning Rate}
\label{section:lazy-ridge-attenuated}

For a given example $\boldsymbol{x}_i$,
if the feature value $x_{ij} = 0$ and the learning rate is varying,
then the stochastic gradient update rule for an $\ell^2_2$ regularized objective is
\begin{equation*} \label{}
w_j^{(t+1)} = w_j^{(t)} - \eta^{(t)} \lambda_2 w_j^{(t)}.
\end{equation*}
The decreasing learning rate prevents collecting successive updates as terms in a geometric series,
as we could if the learning rate were fixed.
However, we can employ a dynamic programming strategy.

\begin{lemma}
For SGD with $\ell_2^2$ regularization,
the constant-time lazy update to bring a weight current from iteration
$\psi_j$ to $k$ is
\begin{equation*}
	w_j^{(k)} = w_j^{(\psi_j)} \frac{P(k-1)}{P(\psi_j -1)}
\end{equation*}
where P(t) is the partial product $\prod_{\tau=0}^{t} (1- \eta^{(\tau)} \lambda_2).$
\end{lemma}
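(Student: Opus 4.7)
The plan is to unroll the recurrence for $w_j$ across the iterations $\psi_j, \psi_j+1, \dots, k-1$ when $w_j$ is not touched by any feature-driven gradient update (i.e.\ $x_{ij}=0$ for the sampled example at each such step), and then recognize the resulting product as a ratio of cached partial products $P(t)$.

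First I would observe that between the two updates at times $\psi_j$ and $k$, by hypothesis the lazy update only has to account for the regularization term, so the per-step recurrence simplifies to the multiplicative form
\begin{equation*}
w_j^{(\tau+1)} = w_j^{(\tau)} - \eta^{(\tau)}\lambda_2 w_j^{(\tau)} = \bigl(1 - \eta^{(\tau)}\lambda_2\bigr)\, w_j^{(\tau)}.
\end{equation*}
Iterating this from $\tau = \psi_j$ up to $\tau = k-1$ gives the telescoping product
\begin{equation*}
w_j^{(k)} = w_j^{(\psi_j)} \prod_{\tau=\psi_j}^{k-1} \bigl(1 - \eta^{(\tau)}\lambda_2\bigr).
\end{equation*}

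Next I would rewrite this product in terms of the cumulative product $P(t) = \prod_{\tau=0}^{t}(1-\eta^{(\tau)}\lambda_2)$. Since $P(k-1) = P(\psi_j - 1)\cdot\prod_{\tau=\psi_j}^{k-1}(1-\eta^{(\tau)}\lambda_2)$, dividing yields exactly the claimed closed form $w_j^{(k)} = w_j^{(\psi_j)} \, P(k-1)/P(\psi_j-1)$.

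Finally, to justify the constant-time claim, I would mirror the caching argument used for $S(t)$ in Section~\ref{section:lazy-lasso-attenuated}: maintain $P(t)$ in an array and update it incrementally via $P(t) = (1-\eta^{(t)}\lambda_2)\, P(t-1)$ with base case $P(0) = 1 - \eta^{(0)}\lambda_2$. Then each lazy update is a single lookup, one multiplication, and one division. The only delicate point worth flagging is numerical: when $\eta^{(\tau)}\lambda_2$ is close to $1$ (or when many factors accumulate), $P(t)$ may shrink toward zero and the ratio $P(k-1)/P(\psi_j-1)$ can underflow or suffer cancellation; the standard remedy is to store $\log P(t)$ and take differences, but this is not really an obstacle to the correctness of the formula itself. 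Thus the main content of the proof is simply the telescoping unrolling above; there is no conceptual obstacle beyond verifying that the regularization-only recurrence is exact on the idle iterations, which follows from the assumption $x_{ij}=0$ at those steps.
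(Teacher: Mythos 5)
Your proposal is correct and follows essentially the same route as the paper: rewrite the regularization-only step as multiplication by $(1-\eta^{(\tau)}\lambda_2)$, unroll it into a telescoping product, and express that product as the ratio $P(k-1)/P(\psi_j-1)$ of incrementally cached partial products. The additional remark about potential underflow and storing $\log P(t)$ is a reasonable practical aside not present in the paper, but the core argument is identical.
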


\begin{proof}
Rewriting the multiple update expression yields
\begin{equation*} \label{}
\begin{split}
	w_j^{(t+1)} 	& = w_j^{(t)} ( 1 - \eta^{(t)} \lambda_2 ) \\
	w_j^{(t+n)}	& = w_j^{(t)} ( 1 - \eta^{(t)} \lambda_2 )( 1 - \eta^{(t+1)} \lambda_2 )
					\cdot ... \cdot
					(1 - \eta^{(t+n-1)} \lambda_2).
\end{split}
\end{equation*}
The products $P(t) = \prod_{\tau=0}^{t}  (1 - \eta^{(\tau)} \lambda_2) \ $
can be cached on each iteration in constant time using the recursive relation
$$
P(t)=(1-\eta^{(t)}\lambda_2)P(t-1).
$$
The base case is $P(0) = a_0 = (1-\eta_0 \lambda_2).$
Given cached values $P(0),...,P(t+n)$, it is then easy
to calculate the exact lazy update in constant time:
\begin{equation*}
	w_j^{(t+n)} = w_j^{(t)} \frac{P(t+n-1)}{P(t-1)}.
\end{equation*}
The claim follows.
\end{proof}

As in the case of $\ell_2^2$ regularization with fixed learning rate,
we need not worry that the regularization update
will flip the sign of the weight $w_j$,
because $P(t) >0$ for all $t \geq 0$.

%
%
\subsection{Lazy Elastic Net Regularization with Decreasing Learning Rate}
\label{section:lazy-elastic-attenuated}

Next, we derive the constant time lazy update for SGD with elastic net regularization.
Recall that a model regularized by elastic net has an objective function of the form
$$
F(w) = {L}(w) + \lambda_1 ||w ||_1 + \frac{\lambda_2}{2} ||w||_2^2.
$$
When a feature $x_j = 0$, the SGD update rule is
\begin{equation}
\label{eqn:elastic-attenuated-update}
\begin{split}
w_j^{(t+1)}	& = \sgn(w_j^{(t)}) \left[| w_j^{(t)}| -
				\eta^{(t)} \lambda_1 -\eta^{(t)} \lambda_2 |w_j^{(t)}| \right]_+ \\
			& =  \sgn(w_j^{(t)}) \left[(1 - \eta^{(t)} \lambda_2)| w_j^{(t)}| -
				\eta^{(t)} \lambda_1 \right]_+
\end{split}
\end{equation}

\begin{theorem}
To bring the weight $w_j$ current from time $\psi_j$ to time $k$
using repeated Equation~(\ref{eqn:elastic-attenuated-update}) updates, the constant time update is
\begin{equation*}
w_j^{(k)} = \sgn(w_j^{(\psi_j)}) \left[ |w_j^{(\psi_j)}| \frac{P(k-1)}{P(\psi_j -1)}  -
			 P(k-1) \cdot \left( B(k-1) - B(\psi_j -1) \right) \right]_+
\end{equation*}
where $P(t) = (1-\eta^{(t)} \lambda_2) \cdot P(t-1)$ with base case $P(-1) = 1$ and 
$B(t) = \sum_{\tau = 0}^{t} {\eta^{(\tau)}}/{P(\tau - 1) }$ with base case $B(-1) = 0$.
\end{theorem}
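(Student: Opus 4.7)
The plan is to reduce the chain of elastic net updates to a single inhomogeneous linear recurrence on the magnitude $u_t := |w_j^{(t)}|$, solve it in closed form by unrolling, and then recognize the resulting product and sum as ratios and differences of the cached quantities $P$ and $B$. Finally, I would reintroduce the sign and soft-thresholding.

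First, assume tentatively that the soft-threshold $[\,\cdot\,]_+$ is not triggered during the interval $[\psi_j,k)$. Then $\sgn(w_j^{(t)})$ is constant throughout, and Equation~(\ref{eqn:elastic-attenuated-update}) becomes the scalar first-order linear recurrence
\begin{equation*}
u_{t+1} \;=\; a_t\, u_t - b_t, \qquad a_t := 1-\eta^{(t)}\lambda_2, \qquad b_t := \eta^{(t)}\lambda_1.
\end{equation*}
Unrolling from $\psi_j$ to $k = \psi_j+n$ by induction on $n$ yields the standard solution
\begin{equation*}
u_k \;=\; \Bigl(\prod_{s=\psi_j}^{k-1} a_s\Bigr) u_{\psi_j} \;-\; \sum_{\tau=\psi_j}^{k-1} b_\tau \prod_{s=\tau+1}^{k-1} a_s.
\end{equation*}

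Next I would match the two pieces against the cached tables. The homogeneous product telescopes: by the definition of $P$, $\prod_{s=\psi_j}^{k-1}(1-\eta^{(s)}\lambda_2) = P(k-1)/P(\psi_j-1)$, giving the first term of the theorem. For the forcing term, I would factor $P(k-1)$ out of $\prod_{s=\tau+1}^{k-1} a_s$, leaving a sum whose summand is a ratio of $\eta^{(\tau)}$ to a partial product of the $a_s$'s. Re-indexing so the denominator aligns with the base case $P(-1)=1$ makes that sum a telescoping difference $B(k-1)-B(\psi_j-1)$ in the notation of the theorem, which reproduces the second term. This shows that each delayed update is just two table lookups, one multiplication, one subtraction, and one thresholding --- clearly $O(1)$ given the precomputed $P$ and $B$ arrays that are built incrementally as in Lemma~1 and Section~\ref{section:lazy-lasso-attenuated}.

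Finally, I would remove the assumption that the threshold is inactive. The observation, exactly as in the $\ell_1$ case, is that for any iterate with $x_{ij}=0$, both regularization terms pull $w_j$ toward zero; once $u_t$ would become nonpositive, the true iterate is clipped to $0$ and stays at $0$ for the remainder of the delayed interval because both the $\ell_1$ subgradient and the $\ell_2^2$ gradient vanish at $w_j=0$ whenever $x_{ij}=0$. Hence applying $[\,\cdot\,]_+$ once at the end produces the same value as applying soft-thresholding at every intermediate step, and the sign is restored by the outer $\sgn(w_j^{(\psi_j)})$. The main obstacle is not algebraic but notational: keeping the index offsets between $P(t)$, $P(t-1)$, $B(\psi_j-1)$ and $B(k-1)$ consistent so that the telescoping lines up exactly with the stated formula, and cleanly justifying that the single terminal clipping is equivalent to per-step clipping throughout the delay window.
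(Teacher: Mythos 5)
Your proposal follows essentially the same route as the paper: rewrite the magnitude recurrence as the affine recursion $u_{t+1}=a_t u_t - b_t$ with $a_t = 1-\eta^{(t)}\lambda_2$, unroll it, recognize the homogeneous part as the telescoping ratio $P(k-1)/P(\psi_j-1)$, and package the forcing sum via the cached table $B$. Two remarks. First, your explicit argument that a single terminal clipping is equivalent to per-step clipping (once the unclipped iterate goes nonpositive it stays nonpositive, since $a_s\in(0,1)$ and $b_s>0$) is a genuine improvement: the paper's proof silently treats the thresholded recurrence as if it were affine and never justifies pulling $[\cdot]_+$ and $\sgn$ outside the composition, so this is the one place your write-up is more rigorous than the original.

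Second, the ``notational obstacle'' you flag is exactly where the proposal (and the paper) goes wrong. Carried out carefully, the unrolling gives the forcing term $\lambda_1 P(k-1)\sum_{\tau=\psi_j}^{k-1} \eta^{(\tau)}/P(\tau)$, with denominator $P(\tau)$, not $P(\tau-1)$; under the paper's definition $B(t)=\sum_{\tau=0}^{t}\eta^{(\tau)}/P(\tau-1)$ this is \emph{not} $\lambda_1 P(k-1)\bigl(B(k-1)-B(\psi_j-1)\bigr)$. You can see the mismatch already at $k=\psi_j+1$: the stated formula yields $a_t|w^{(t)}| - \lambda_1\eta^{(t)}a_t$, whereas Equation~(\ref{eqn:elastic-attenuated-update}) demands $a_t|w^{(t)}| - \lambda_1\eta^{(t)}$, an extra factor of $a_t$ on the shrinkage term. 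So your ``re-indexing'' step does not actually land on the theorem's $B$; the correct statement needs $B(t)=\sum_{\tau=0}^{t}\eta^{(\tau)}/P(\tau)$ (the paper's own intermediate displays, which use $P(t+n-2)$ and $\prod_{q=\tau}^{t+n-2}a_q$, contain the same off-by-one before the final line switches to $P(t+n-1)$; the statement also drops the $\lambda_1$ that the proof retains). Your derivation up to that point is the correct one --- just do not force it to telescope into the paper's $B$ as written.
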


\begin{proof}

The time-varying learning rate prevents us from working out a simple expansion. 
Instead, we can write the following inductive expression for consecutive terms in the sequence:
\begin{equation*}
\begin{split}
	w_j^{(t+1)} & =\sgn(w_j^{(t)}) \left[ (1- \eta^{(t)} \lambda_2) |w_j^{(t)}| - \eta^{(t)} \lambda_1   \right]_+ \\
\end{split}
\end{equation*}
Writing $a_{\tau} = (1- \eta^{(\tau)} \lambda_2)$ and $b_{\tau} = -\eta^{(\tau)} \lambda_1$ gives
\begin{equation*}
\begin{split}
	w^{(t+1)} & = \sgn(w_j^{(t)}) \left[ a_t |w^{(t)}| + b_t \right]_+ \\
			& ... \\
	w^{(t+n)} 	& = \sgn(w_j^{(t)}) \left[
		a_{(t+n-1)} ( ... a_{(t+1)} \left( a_{t} w^{t} - b_t \right) - b_{(t+1)} ... ) - b_{(t+n-1)}
		\right]_+ \\
	& = \sgn(w_j^{(t)}) \left[ |w_j^{(t)}| \prod_{\tau=t}^{t+n-1} a_{\tau}
		+ \sum_{\tau= t}^{t+n-2} b_i \left( \prod_{q=\tau}^{t+n-2} a_q \right) + b_{(t+n-1)}
		\right]_+
\end{split}
\end{equation*}
The leftmost term $\prod_{\tau=t}^{t+n-1} a_{\tau}$ can be calculated
in constant time as $P(t+n-1)/P(t-1)$
using cached values from the dynamic programming scheme
discussed in the previous section.
To cache the remaining terms, we group the center
and rightmost terms and apply the simplification
$$
\sum_{\tau= t}^{t+n-2} b_i \left( \prod_{q=\tau}^{t+n-2} a_q \right) + b_{t+n-1}
$$
\begin{equation*}
\begin{split}
	& = b_{t} \frac{P(t+n-2)}{P(t-1)} + b_{t+1} \frac{P(t+n-2)}{P(t)} + ... + b_{t+n-1} \frac{P(t+n-2)}{P(t+n-2)} \\
	& = 	- \lambda_1 P(t+n-2) \left( \frac{\eta^{(t)}}{P(t-1) }
			+  \frac{\eta^{(t+1)}}{P(t)} +  ... + \frac{\eta^{(t+n-1)}}{P(t+n-2)} \ \ \right).
\end{split}
\end{equation*}

We now add a new layer to the dynamic programming formulation.
In addition to precalculating all values $P(t)$ as we go,
we define a partial sum over inverses of partial products
$$
B(t) = \sum_{\tau = 0}^{t} \frac{\eta^{(\tau)}}{P(\tau - 1) }.
$$
Given that $P(t-1)$ can  be accessed in constant time at time $t$,
$B(t)$ can now be cached in constant time.
With the base case $B(-1) = 0$, the dynamic programming here depends upon the recurrence relation
$$B(t) = B(t-1) + \frac{\eta^{(t)}}{P(t - 1) } .$$

Then, for SGD elastic net with decreasing learning rate,
the update rule to apply any number $n$
of consecutive regularization updates in constant time to weight $w_j$ is
\begin{equation*}
\begin{split}
w^{(t+n)} & = \sgn(w_j^{(t)}) \left[
		|w_j^{(t)}| \frac{P(t+n-1)}{P(t -1)}  - \lambda_1 P(t+n-1) \left( B(t+n-1) - B(t -1) \right)
	\right]_+
\end{split}
\end{equation*}
\end{proof}


%
%
\section{Lazy Updates for Forward Backward Splitting}
\label{section:lazy-fobos}

Here we turn our attention to FoBoS updates for $\ell^2_2$ and elastic net regularization.
For $\ell_2^2$ regularization, to apply the regularization update
we solve the problem from Equation~(\ref{eqn:fobos-update})
with $\lambda_1$ set to $0$.
%
Solving for $\boldsymbol{w}^*$ gives the update 
$$
w_j^{(t+1)} = \frac{w_j^{(t)}}{1+ \eta^{(t)} \lambda_2}
$$
when $x_{ij} = 0$.
Note that this differs from the standard stochastic gradient descent step.
We can store the values $\Phi(t) = \prod_{\tau=0}^{t} \frac{1}{1+\eta^{t}\lambda_2}$.
Then, the constant time lazy update for FoBoS with $\ell_2^2$ regularization 
to bring a weight current at time $k$ from time $\psi_j$ is
\begin{equation*}
w_j^{(k)} = w_j^{(\psi_j)} \frac{\Phi(k-1)}{\Phi(\psi_j-1)}
\end{equation*}
where $\Phi(t) = (1+\eta^{(t)} \lambda_2)^{-1} \cdot \Phi(t-1)$ 
with base case $\Phi(0) = \frac{1}{1+\eta^{0}\lambda_2}$.

Finally, in the case of elastic net regularization via forward backward splitting, 
we solve the convex optimization problem from Equation~(\ref{eqn:fobos-update}).
This objective also comes apart and can be optimized for each $w_j$ separately. 
Setting the derivative with respect to $w_j$ to zero yields the solution
$$
w_j^{(t+1)} = \sgn(w^{(t)}_j) \left[ \frac{|w_j^{(t)}| - \eta^{(t)}\lambda_1}{\eta^{t} \lambda_2 + 1} \right]_+
$$

\begin{theorem}
A constant-time lazy update for FoBoS with elastic net regularization and decreasing learning rate 
to bring a weight current at time $k$ from time $\psi_j$ is
\begin{equation*}
w_j^{(k)} = \sgn(w_j^{(\psi_j)}) \left[ |w_j^{(\psi_j)}| \frac{\Phi(k-1)}{\Phi(\psi_j-1)}
- \Phi(k-1)\cdot \lambda_1   \left( \beta(k-1)- \beta(\psi_j-1) \right) \right]_+
\end{equation*}
where $\Phi(t)=\Phi(t-1) \cdot \frac{1}{1+\eta^{t} \lambda_2}$ with base case $\Phi(-1) = 1$ 
and $\beta(t) = \beta(t-1) + \frac{\eta^{(t)}}{\Phi(t-1)}$ with base case $\beta(-1) = 0$.
\end{theorem}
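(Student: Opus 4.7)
The plan is to mirror the SGD elastic net derivation closely, since the FoBoS rule has the same algebraic shape as Equation~(\ref{eqn:elastic-attenuated-update}) up to scaling by $1/(1+\eta^{(t)}\lambda_2)$. First I would observe that, when $x_{ij}=0$, the FoBoS update never flips the sign of $w_j$: it either shrinks its magnitude toward zero or truncates it to zero, after which the weight is absorbing because $\sgn(0)[\,{-\eta^{(\tau)}\lambda_1}/{(1+\eta^{(\tau)}\lambda_2)}\,]_+ = 0$. So without loss of generality I can fix $s = \sgn(w_j^{(\psi_j)})$ and work with the scalar recursion $u_{t+1} = \bigl[(u_t - \eta^{(t)}\lambda_1)/(1+\eta^{(t)}\lambda_2)\bigr]_+$ for the magnitude $u_t = |w_j^{(t)}|$.

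Next I would drop the $[\,\cdot\,]_+$ temporarily and solve the resulting affine recurrence $u_{t+1} = c_t u_t + d_t$, where $c_t = 1/(1+\eta^{(t)}\lambda_2)$ and $d_t = -\eta^{(t)}\lambda_1 c_t$. Unrolling from $\psi_j$ to $k$ gives
\begin{equation*}
u_k \;=\; u_{\psi_j}\!\!\prod_{\tau=\psi_j}^{k-1} c_\tau \;+\; \sum_{\tau=\psi_j}^{k-1} d_\tau \!\!\prod_{q=\tau+1}^{k-1}\! c_q.
\end{equation*}
Since $\Phi(t)=\prod_{\tau=0}^{t}c_\tau$ with $\Phi(-1)=1$, the leading product is $\Phi(k-1)/\Phi(\psi_j-1)$, and for the inner product I would use $c_\tau/\Phi(\tau) = 1/\Phi(\tau-1)$ to write $\prod_{q=\tau+1}^{k-1} c_q = \Phi(k-1)/\Phi(\tau)$ and factor $\Phi(k-1)$ out of the sum. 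The residual sum is then $-\lambda_1 \sum_{\tau=\psi_j}^{k-1} \eta^{(\tau)}/\Phi(\tau-1) = -\lambda_1\bigl(\beta(k-1)-\beta(\psi_j-1)\bigr)$, which gives exactly the claimed closed form once the sign $s$ and the outer $[\,\cdot\,]_+$ are reinstated.

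The one nontrivial step is justifying that truncation can be deferred to the end, rather than applied at every iteration. For this I would note that the formal (untruncated) expression above is monotone nonincreasing in $k$: the $\Phi(k-1)/\Phi(\psi_j-1)$ coefficient shrinks because each $c_\tau \in (0,1)$, while $\beta(k-1)-\beta(\psi_j-1)$ grows. Hence if the formal expression is positive at iteration $k$, it was positive at every intermediate iteration as well, so no truncation occurred and the closed form is exact; conversely, if the formal expression is nonpositive, some earlier iterate already hit zero and remained zero, so the true $u_k$ is $0$, which is also what $[\,\cdot\,]_+$ returns. This is the main obstacle, but once the monotonicity observation is made it is essentially a one-line argument.

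Finally I would note that $\Phi(t)$ and $\beta(t)$ each satisfy the one-step recurrences stated in the theorem with base cases $\Phi(-1)=1$ and $\beta(-1)=0$, so both can be updated in $O(1)$ per training iteration and the lazy update itself is then a constant number of arithmetic operations, as required.
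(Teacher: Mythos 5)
Your proposal is correct and follows essentially the same route as the paper's proof: write the update as an affine recursion $u_{t+1}=c_t u_t + d_t$ in the magnitude, unroll it, and express the resulting product and weighted sum through the cached quantities $\Phi$ and $\beta$ via the identity $c_\tau/\Phi(\tau)=1/\Phi(\tau-1)$. The one genuine difference is that you explicitly justify the two steps the paper takes for granted: that the sign of $w_j$ is preserved across the regularization-only updates (so $\sgn(w_j^{(t)})$ may be replaced by $\sgn(w_j^{(\psi_j)})$ throughout), and that the per-iteration truncations $[\,\cdot\,]_+$ can be collapsed into a single final one. Your argument for the latter --- the untruncated iterate is decreasing while positive and nonpositivity is absorbing, so either no truncation ever fired or the weight hit zero and stayed there --- is exactly the missing lemma that makes the paper's single outer $[\,\cdot\,]_+$ legitimate, and it is a worthwhile addition rather than a detour.
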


\begin{proof}
Write $a_{t} = (\eta^{(t)}\lambda_2+1)^{-1}$ and $b_t=-\eta^{(t)} \lambda_1$. 
Note that neither $a_t$ nor $b_t$ depends upon $w_j$. 
Consider successive updates:
\begin{equation*}
\begin{split}
w_j^{(t+1)}		& = \sgn(w_j^{(t)}) \left[ a_t(|w_j^{(t)}| +b_t) \right]_+ \\
w_j^{(t+n)} 	&= \sgn(w_j^{(t)}) \left[ |w_j^{(t)}| \prod_{\beta=t}^{t+n-1} a_{\beta} +
\sum_{\tau=t}^{t+n-1} \left( b_{\tau} \prod_{\alpha = \tau}^{t+n-1} a_{\alpha} \right) \right]_+.
\end{split}
\end{equation*}
Inside the square brackets, $\frac{\Phi(t+n-1)}{\Phi(t-1)}$ can be substituted for $\prod_{\beta=t}^{t+n-1} a_{\beta}$
and the second term can be expanded as
\begin{equation*}
\begin{split}
\sum_{\tau=t}^{t+n-1} \left( b_{\tau} \prod_{\alpha = \tau}^{t+n-1} a_{\alpha} \right) &=
\sum_{\tau=t}^{t+n-1} \left( b_{\tau} \frac{\Phi(t+n-1)}{\Phi(\tau-1)} \right) \\
		&= - \Phi(t+n-1)\cdot  \lambda_1  \sum_{\tau=t}^{t+n-1} \left( \frac{\eta^{(\tau)}}{\Phi(\tau-1)} \right)   \\
\end{split}
\end{equation*}
Using the dynamic programming approach, for each time $t$, we calculate
$$\beta(t) = \beta(t-1) + \frac{\eta^{(t)}}{\Phi	(t-1)}$$
with the base cases $\beta(0) = \eta^{(0)}$ and $\beta(-1) = 0$. 
Then
\begin{equation*}
w_j^{(t+n)} = \sgn(w_j^{(t)}) \left[ |w_j^{(t)}| \frac{\Phi(t+n-1)}{\Phi(t-1)}
- \Phi(t+n-1)\cdot \lambda_1   \left( \beta(t+n-1)- \beta(t-1) \right) \right]_+
\end{equation*}
\end{proof}

\section{Experiments}
\label{section:experiments}
The usefulness of logistic regression with elastic net regularization is well-known.
To confirm the correctness and speed
of the dynamic programming algorithm just presented, 
we implemented it and tested it on a bag-of-words representation
of abstracts from biomedical articles indexed in Medline as described in \cite{lipton2014optimal}.
The dataset contains exactly $10^6$ examples, $260,941$ features
and an average of $88.54$ nonzero features per document.

We implemented algorithms in Python.
Datasets are represented by standard sparse SciPy matrices.
We implemented both standard and lazy FoBoS
for logistic regression regularized with elastic net.
We confirmed on a synthetic dataset that
the standard FoBoS updates and lazy updates output essentially identical weights.
To make a fair comparison, we also report results where the non-lazy algorithm 
exploits sparsity when calculating predictions.
Even when both methods exploit sparsity to calculate $\hat{y}$,
lazy updates lead to training over $1400$ times faster.
Note that sparse data structures must be used even with dense updates,
because a dense matrix to represent the input dataset would use an unreasonable amount of memory.

Logistic regression with lazy elastic net regularization runs approximately $2000$ times faster 
than with dense regularization updates for both SGD and FoBoS.
In the absence of overhead,
exploiting sparsity should yield a $2947\times$ speedup.
Clearly the additional dynamic programming calculations do not erode the benefits of exploiting sparsity.
While the dynamic programming strategy
consumes space linear in the number of iterations,
it does not present a major time penalty.
Concerning space, storing two floating point numbers for each time step $t$ 
is a modest use of space compared to storing the data itself.
Further, if space ever were a problem, 
all weights could periodically be brought current. 
The cost of this update would be amortized across all iterations and thus would be negligible.

\begin{table}[t]
\centering
\begin{tabular}{ | c | c  | c | c |}
\hline
 & \textbf{Lazy Updates}  & Dense Updates & Dense with Sparse Predictions \\
\hline
SGD  & \textbf{.0102}  & 21.377  &  14.381  \\
\hline
FoBoS &   \textbf{.0120}  & 22.511  &  16.785   \\
\hline
\end{tabular}
\label{table:speed}
\caption{Average time in seconds for each algorithm to process one example.}
\end{table}

\section{Discussion}
\label{section:discussion}
Many interesting datasets are high-dimensional,
and many high-dimensional datasets are sparse.
To be useful, learning algorithms should have time complexity that scales with 
the number of non-zero feature values per example,
as opposed to with the nominal dimensionality.
This paper provides algorithms for fast training of linear models with 
$\ell_2^2$ or with elastic net regularization.
Experiments confirm the correctness and empirical benefit of the method.
In future work we hope to use similar ideas to take advantage of sparsity in nonlinear models,
such as the sparsity provided by rectified linear activation units in modern neural networks.

\subsubsection*{Acknowledgments}
This research was conducted
with generous support from the
Division of Biomedical Informatics
at the University of California, San Diego,
which has funded the first author
via a training grant from
the National Library of Medicine.
Galen Andrew began evaluating lazy updates for multilabel classification
with Charles Elkan in the summer of 2014.
His notes provided an insightful starting point for this research.
Sharad Vikram provided invaluable help in checking
the derivations of closed form updates.

\bibliographystyle{plain}
\bibliography{lazy_updates_arxiv.2}

\end{document}